%% file: main.tex
\documentclass[runningheads]{llncs}

\usepackage{eccv}

\usepackage{eccvabbrv}

\usepackage{graphicx}
\usepackage{booktabs}
\usepackage{wrapfig}
\usepackage{tabularx,booktabs}
\usepackage{multirow}
\usepackage{microtype}

\usepackage[accsupp]{axessibility}  

\usepackage[symbol]{footmisc}

\usepackage{graphicx}
\usepackage{caption}

\usepackage{hyperref}

\usepackage{orcidlink}

\newcolumntype{Y}{>{\centering\arraybackslash}X}
\begin{document}

\title{Leveraging Imperfect Restoration for Data Availability Attack} 


\author{Yi Huang\inst{1}$^*$\orcidlink{0000-0002-4920-4333} \and 
Jeremy Styborski\inst{1}$^*$\orcidlink{0009-0005-3465-6213} \and
Mingzhi Lyu\inst{2}$^*$\orcidlink{0009-0005-9359-5908} \and
Fan Wang\inst{2}\orcidlink{0000-0001-8582-1673} \and
Adams Kong\inst{1}\orcidlink{0000-0002-9728-9511}}

\authorrunning{Y.Huang et al.}

\institute{College of Computing and Data Science, Nanyang Technological University, Singapore \and
Rapid-Rich Object Search (ROSE) Lab, Interdisciplinary Graduate Programme, Nanyang Technological University, Singapore \\
\email{shellbyhuang@gmail.com, \{styb0001,lyum0002,fan005,AdamsKong\}@ntu.edu.sg}}

\maketitle

\def\thefootnote{*}\footnotetext{Equal contribution}
\renewcommand*{\thefootnote}{\arabic{footnote}}


\begin{abstract}

The abundance of online data is at risk of unauthorized usage in training deep learning models. To counter this, various Data Availability Attacks (DAAs) have been devised to make data unlearnable for such models by subtly perturbing the training data. However, existing attacks often excel against either Supervised Learning (SL) or Self-Supervised Learning (SSL) scenarios. Among these, a model-free approach that generates a Convolution-based Unlearnable Dataset (CUDA) stands out as the most robust DAA across both SSL and SL. Nonetheless, CUDA's effectiveness against SSL is underwhelming and it faces a severe trade-off between image quality and its poisoning effect. In this paper, we conduct a theoretical analysis of CUDA, uncovering the sub-optimal gradients it introduces and elucidating the strategy it employs to induce class-wise bias for data poisoning. Building on this, we propose a novel poisoning method named Imperfect Restoration Poisoning (IRP), aiming to preserve high image quality while achieving strong poisoning effects. Through extensive comparisons of IRP with eight baselines across SL and SSL, coupled with evaluations alongside five representative defense methods, we showcase the superiority of IRP. \textbf{Code:} \ \url{https://github.com/lyumingzhi/IRP}

\keywords{Data Availability Attacks \and Supervised Learning \and Self-Supervised Learning}
\end{abstract}


\section{Introduction}
\label{sec:intro}

The proliferation of online data has proven an indispensable resource for the advancement of deep learning models. However, the collection of certain datasets without explicit consent poses a possible threat to personal privacy \cite{birhane2021large}. Additionally, the emergence of generative AI presents new challenges, as training with unauthorized data could infringe upon the owners' copyright \cite{verge_article}.

DAAs have emerged as a promising strategy to address the issue of unauthorized data usage\cite{feng2019learning, yuan2021neural, huang2021unlearnable, fowl2021adversarial, tao2021better, sandoval2022autoregressive, yu2022availability, wu2022one, he2023indiscriminate, sadasivan2023cuda, fu2022robust}. These attacks introduce subtle perturbations into training data, hindering the model's ability to effectively learn useful information, which subsequently leads to poor performance on unseen data. This poor performance manifests as a substantial decrease in accuracy when the model is tested on clean data in classification tasks. Currently, the majority of DAAs are developed within the framework of SL. However, a recent study by He et al. \cite{he2023indiscriminate} reveals that Adversarial Poison (AP) \cite{fowl2021adversarial}, a cutting-edge DAA targeting SL, lacks effectiveness when applied to SSL. In response, they introduce a novel approach, termed Contrastive Poison (CP) \cite{he2023indiscriminate}, to counter SSL. Their research leads to several questions: (1) Are other DAAs designed for SL also ineffective against SSL (2) Does the proposed CP method maintain similar effectiveness in SL scenarios? (3) Can we develop a DAA capable of conducting effective attacks in both SL and SSL scenarios?

\begin{wrapfigure}[21]{r}{0.5\textwidth}
    \vspace{-1.1cm}
    \begin{center}
        \includegraphics[width=\linewidth]{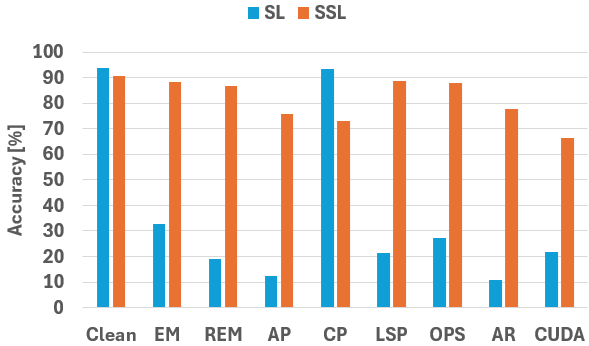}
    \end{center}
    \vspace{-0.5cm}
    \caption{The clean test accuracies of different DAAs on CIFAR-10 \cite{krizhevsky2009learning}. Lower accuracy represents better attack performance.}
    \label{fig_effdrop}
    \vspace{-0.1cm}
    \begin{center}
        \includegraphics[width=0.5\textwidth]{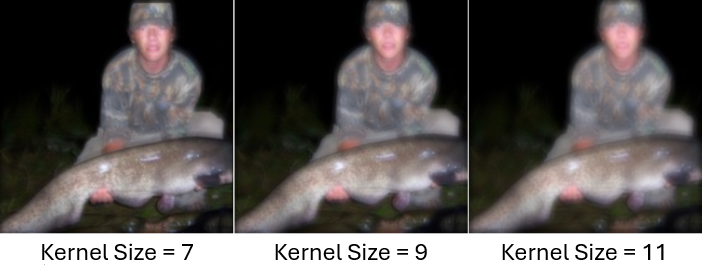}
    \end{center}
    \vspace{-0.5cm}
    \caption{The poisoned images generated by CUDA with different kernel sizes.}
    \label{kernel_quatliy}
\end{wrapfigure}

To answer these questions, we experimentally evaluate seven representative DAAs designed for SL on SSL. SSL is a strong defense against most DAAs due to its augmentation invariance objective; DAA features that are distorted or obfuscated by augmentations are ignored by the SSL algorithm and cannot affect downstream tasks. Accordingly, we find in Fig. \ref{fig_effdrop} that most poisons for CIFAR-10 \cite{krizhevsky2009learning} fail to generalize to SSL. Only CUDA \cite{sadasivan2023cuda} and CP \cite{he2023indiscriminate} exhibit significant impact on SSL. CUDA filters inject poison features throughout images that are resistant to cropping, flipping, and color shifts, and CP is designed specifically to counter SSL. We note a significant loss of performance for CP in SL scenarios. Therefore, CUDA emerges as the most potent DAA in both SSL and SL. However, the clean test accuracy of SSL trained on CUDA still hovers around 70\%, suggesting that models trained on poisoned data retain some usability. As suggested by Sadasivan et al. \cite{sadasivan2023cuda}, increasing the size of the convolution kernels in CUDA could provide a stronger poisoning effect, but it also degrades image quality. Fig. \ref{kernel_quatliy} shows that increased kernel size results in obviously blurred images. This compromise is often unacceptable in practical scenarios, particularly for sharing images and artwork online.

To comprehend CUDA's efficacy, Sadasivan et al. provide a theoretical analysis under assumptions of Gaussian-distributed data with independent elements and a two-class scenario. However, these assumptions may not hold in real-world scenarios. In addition, their analysis is not based on deep learning, making it challenging to derive insights for designing a more effective poison method. To gain a deeper understanding of CUDA's impact on model training, we first conduct a theoretical analysis of CUDA using a deep learning model. Our analysis reveals how CUDA generates sub-optimal gradients for clean data and introduces class-wise bias through random filters. Building upon this analysis, we propose a new poisoning method, named Imperfect Restoration Poisoning (IRP), which maintains high image quality while achieving a stronger poisoning effect. We compare IRP to eight representative DAAs in both SL and SSL scenarios. In the SL scenario, we additionally compare IRP with other DAAs under various defense mechanisms, including adversarial training \cite{tao2021better}, Image Shortcut Squeezing \cite{liu2023image}, Mixup \cite{zhang2018mixup}, Cutout \cite{devries2017improved}, and CutMix \cite{yun2019cutmix}. All experimental results indicate that IRP significantly outperforms previous DAAs. Our main contributions are:
\begin{itemize}
        \item We experimentally assess eight representative DAAs in both supervised and self-supervised learning scenarios, revealing that existing DAAs fail to achieve satisfactory effectiveness simultaneously in both contexts.
        \item We conduct a theoretical analysis of CUDA on deep learning models, revealing how it generates sub-optimal gradients for clean data and introduces class-wise bias through random filters.
        \item We propose a novel DAA named IRP based on imperfect restoration, which achieves high effectiveness in both SL and SSL and maintains superior image quality compared to CUDA.
        \item We perform an exhaustive comparison between IRP and eight other DAAs across SL, SSL, and five defense techniques on CIFAR-10 and a subset of ImageNet \cite{deng2009imagenet}. IRP outperforms baseline poisons by a large margin in all cases. We further verify IRP across five additional architectures and two supplementary datasets. IRP displays high effectiveness across all experiments.
\end{itemize}


\section{Related Works}

DAAs have garnered significant attention in recent years for their application to data protection. Various methodologies have been proposed to deliberately induce malfunctions in the target model in order to safeguard against unauthorized data usage. DAAs can be broadly categorized into two types, model-reliant methods \cite{feng2019learning, yuan2021neural, huang2021unlearnable, fowl2021adversarial, fu2022robust, he2023indiscriminate} and model-free methods \cite{sandoval2022autoregressive, wu2022one, yu2022availability, sadasivan2023cuda}.

\noindent\textbf{Model-Reliant Methods:}
Early approaches to DAAs often frame the poisoning problem as a bilevel optimization task, balancing loss minimization with respect to model parameters with loss maximization with respect to perturbed inputs. While these methods \cite{jagielski2018manipulating, biggio2012poisoning} initially showed promise, their applicability to deep neural networks is limited due to intractability in obtaining exact solutions. Efforts to address optimization challenges in perturbation generation, as seen in works by Feng et al. \cite{feng2019learning} and Yuan et al. \cite{yuan2021neural}, often introduce significant computational overhead, limiting scalability for real-world applications. Alternatively, Huang et al.\cite{huang2021unlearnable} propose another bi-level Error Minimizing (EM) approach, deliberately optimizing perturbations to minimize training loss. By alternately training a surrogate model and optimizing perturbations, the efficacy of the perturbations is attained through multiple rounds of optimization. In contrast to traditional bi-level approaches, Fowl et al. \cite{fowl2021adversarial} demonstrate the effectiveness of utilizing common objectives from adversarial examples for generating potent Adversarial Poisons (AP). However, Tao et al. \cite{tao2021better} pinpoint that these DAAs can be easily broken by adversarial training. In response, Fu et al. \cite{fu2022robust} introduce Robust Error-Minimizing (REM) which enhances the poisoning effects against adversarial training by replacing the surrogate in EM with an adversarially-trained model. Recently, He et al. \cite{he2023indiscriminate} assert that all aforementioned methods are tailored for supervised scenarios, thereby making them unsuitable for self-supervised learning. To address this gap, they introduce Contrastive Poison (CP), specifically designed for self-supervised learning. However, their assessment of DAAs tailored for SL is exclusively centered on AP, with other DAAs left unexplored. Additionally, while they show some effectiveness of their class-wise CP on both SL and SSL, the test accuracies on SSL remain high, ranging from 60.7\% to 68.0\%. In addition, class-wise additive perturbations can be recovered by the average image of a class, making them relatively easy to remove \cite{sandoval2022autoregressive}. Therefore, we do not consider the class-wise poisons that apply the same additive noise to images, including class-wise attacks by EM, AP, and CP.

\noindent\textbf{Model-Free Methods:}
Model-free DAAs tackle the data protection issue by leveraging shortcut learning. These methods are grounded in the observation that Deep Neural Networks (DNNs) often prioritize easily learnable shortcuts over semantic features, allowing them to distinguish between examples from different classes \cite{caron2020finite, ilyas2019adversarial}. Expanding on this concept, Yu et al. \cite{yu2022availability} propose incorporating artificially generated Linearly Separable Patterns (LSP) sampled from high-dimensional Gaussian distributions into images as an effective method for data poisoning. Sandoval-Segura et al. \cite{sandoval2022autoregressive} discover that applying additive perturbations derived from autoregressive (AR) processes to clean data can serve as an effective shortcut. Wu et al. \cite{wu2022one} examine the model's susceptibility to sparse poisons and demonstrate that consistently perturbing only one pixel is adequate to generate potent poisons. In contrast to applying additive perturbations, Sadasivan et al. \cite{sadasivan2023cuda} introduce a Convolution-based Unlearnable Dataset (CUDA) generation method, which utilizes randomly generated class-wise convolutional filters. In our evaluation, CUDA demonstrates moderate robustness to SSL. However, its effectiveness on SSL remains insufficient and CUDA grapples with a severe trade-off between image quality and poison efficacy.

\noindent\textbf{DAA Mitigation Strategies:} 
Previous studies have demonstrated that Adversarial Training (AT) \cite{goodfellow2014explaining} is an effective method to mitigate the potency of DAAs \cite{tao2021better,wen2023is}. However, AT is impeded by its significant computational demands. Alternatively, various image preprocessing techniques, such as data augmentation (e.g., Mixup \cite{zhang2018mixup}, Cutout \cite{devries2017improved}, and CutMix \cite{yun2019cutmix}), have been explored. While these augmentation-based techniques show some impact, they are not as effective as AT \cite{fowl2021adversarial}. Recently, Liu et al. \cite{liu2023image} introduce Image Shortcut Squeezing (ISS) \cite{liu2023image}, a compression-based approach to counter DAAs. They demonstrate that ISS surpasses previously studied data augmentation countermeasures and achieves results comparable to AT. We evaluate the proposed IRP against multiple countermeasures, including AT, ISS, Mixup, Cutout, and CutMix.


\section{Analysis}

\subsection{Threat Model}

Like CUDA, most DAA papers define an ``attacker'' that applies a DAA on a labeled dataset that they want to protect yet publish. The attacker assumes no knowledge of any downstream DNN training process. It is therefore crucial to demonstrate that DAAs are effective across multiple training methods, including SL, SSL, and AT. The attacker generates class-based perturbations to their image set in order to inject poison features. The attacker publishes their poisoned images, optionally with captions or labels. Model trainers collect these poisoned images into datasets and may further alter or label the images.

\subsection{Notation and Preliminaries}

For clarity, we define a set of notations. $f$ represents a Convolution Neural Network (CNN), including its training loss function but excluding the first layer convolution 2D filter $W \in \mathbb{R}^{d_1 \times d_2}$. $X \in \mathbb{R}^{m_1 \times m_2}$ denotes an input image and $X_c$ represents an image belonging to class $c \in [1, ..., \mathbb{C}]$, where $\mathbb{C}$ is total number of the classes. Given that cross-correlation is predominantly used in place of convolutions in practice, the first layer operation is denoted as $X \star W$, where $\star$ signifies cross-correlation. Note that $W$ is a part of the CNN and $f(X \star W)$ is the loss value of $X$. To retain readability, we focus on gradient analysis of the first layer filter, $W$. CUDA employs class-specific convolutional filters for data poisoning, and we provide a brief overview of its operation. Let $R_c \in \mathbb{R}^{\kappa \times \kappa}$ be a CUDA filter for class $c$ images. According to \cite{sadasivan2023cuda}, the generation procedure for $R_c$ involves setting one random parameter among the $\kappa^2$ parameters to 1 within each $R_c$. Concurrently, the remaining parameters are initialized randomly, drawn from a uniform distribution with support $[0, p_u]$. An image $X_c$ belonging to class $c$ is poisoned by cross-correlation with the corresponding CUDA filter $R_c$, denoted as $X_c \star R_c$.

\subsection{The Non-Optimal Gradient}

To understand how CUDA lowers the accuracy of clean data, we analyze the gradient of $f$ for both clean and CUDA-poisoned data. Without loss of generality, the training set consists of $\mathbb{C}$ clean images, each belonging to a different class, with corresponding poisoned images denoted as $ X_i \star R_i $, $i \in [1, ..., \mathbb{C}]$. The gradient of the clean training data is
\begin{equation}
    \frac{\partial \sum_{i=1}^\mathbb{C} f(X_i \star W)}{\partial W} = \sum_{i=1}^\mathbb{C} X_i \star \frac{\partial f(u_i)}{\partial u_i},
    \label{ep1}
\end{equation}
where $u_i$ is an intermediate variable equal to  $X_i \star W$. The gradient of the poisoned training data is 
\begin{equation}
    \frac{\partial \sum_{i=1}^\mathbb{C} f(( X_i \star R_i)\star W)}{\partial W} = \sum_{i=1}^\mathbb{C} (X_i \star R_i) \star \frac{\partial f(u_{p, i})}{\partial u_{p, i}},
     \label{ep2}
\end{equation}
where the subscript $p$ indicates that the intermediate variable is from poisoned data. Using these gradients to update $W$, we obtain
\begin{equation}
\begin{split}
    W_{t+1} = W_t - \lambda \sum_{i=1}^\mathbb{C}  X_i \star \frac{\partial f(u_{i})}{\partial u_{i}}
    \\
    W_{p, t+1} = W_t - \lambda \sum_{i=1}^\mathbb{C} (X_i \star R_i ) \star \frac{\partial f(u_{p,i})}{\partial u_{p,i}}
    \end{split}
    \label{ep3}
\end{equation}
where $\lambda$ is learning rate. Inputting all clean training images, $X_1,\cdots, X_C$ to $f(\cdot \star W_{t+1})$ and $f(\cdot \star W_{p,t+1})$, we obtain clean and poison loss values $\sum_{i=1}^{\mathbb{C}}f \left( X_i \star W_{t+1} \right)$ and $\sum_{i=1}^{\mathbb{C}}f \left( X_i \star W_{p, t+1} \right)$, respectively. Given that $\sum_{i=1}^{\mathbb{C}} X_i \star \frac{\partial f(o_i)}{\partial i} $ in Eq. \ref{ep1} is the optimal gradient descent direction for the clean data, any deviation from this direction will result in increased loss when $\lambda$ is small. Thus, $\sum_{i=1}^{\mathbb{C}} f \left( X_i \star W_{p,t+1} \right) > \sum_{i=1}^{\mathbb{C}} f \left( X_i \star W_{t+1}  \right)$ for clean data. In other words, CUDA employs a non-optimal gradient to disrupt training, rendering clean data unrecognizable.

\vspace{-0.1cm}
\subsection{Poisoning through Class-Wise Bias}

Though the previous subsection uncovers how CUDA filters increase training loss, it does not indicate how these filters poison the network. To further investigate its mechanism, we input $X_c \star R_c$ to $f(\cdot \star W_{p,t+1} )$, whose first layer output is 
\begin{equation}
(X_c \star R_c) \star W_{p,t+1} = (X_c \star R_c) \star W_t - \lambda (X_c \star R_c) \star \sum_{i=1}^{C} \left((X_i \star R_i) \star \frac{\partial f(u_{p,i})}{\partial u_{p,i}}\right).
  \label{ep5}
    \end{equation}
Assuming $W_t$ is not poisoned by CUDA filters, we ignore the term $(X_c \star R_c)\star W_t$. Since cross-correlation is equivalent to a rotated convolution (i.e., $X_c \star R_c = X_c \ast R_{c\pi}$, where $\ast$ represents convolution and $R_{c\pi}$ represents rotating the 2D filter $R_c$ by $\pi$), the second term of Eq. \ref{ep5} can be rewritten as 
\begin{equation}
\lambda X_c * \sum_{i=1}^{C}  R_{c \pi} * R_i * X_{i \pi} * \frac{\partial f(u_{p,i})}{\partial u_{p,i}}.
 \label{ep6}
\end{equation}
Eq. \ref{ep6} shows that the output of the first layer is influenced by $R_{c \pi} * R_i$. Since $R_{c \pi} * R_i = R_{c \pi} \star R_{i \pi}$, we analyze $R_{c \pi} \star R_{i \pi}$. Theorem 1 describes the statistical probabilities of $R_{c \pi} \star R_{i \pi}$ and $R_{c \pi} \star R_{c \pi}$. The proof is provided in Appendix \ref{subsec:Proof_Thm1}.

\begin{theorem}
    Given two different filters, $R_i$ and $R_c$, generated by CUDA, if $1 > (\kappa^2 - 2) p_u^2 + 2p_u$, then
    $R_{c\pi}\star R_{i\pi}$ and $R_{c\pi}\star R_{c\pi}$ have the following properties:
    \begin{enumerate}
        \item The peak of $R_{c\pi}\star R_{c\pi}$is located at the center.
        \item The peak of $R_{c\pi}\star R_{i\pi}$ occurs at the position where two ones in  $R_{c\pi}$ and  $R_{i\pi}$ appear at the same position in $R_{c\pi}\star R_{i\pi}$.
        \item The probability that the peak of $R_{c\pi}\star R_{i\pi}$ lies in the center is $\frac{1}{\kappa^2}$.
        \item The expected peak of $R_{c\pi}\star R_{c\pi}$ is higher than the expected peak of $R_{c\pi}\star R_{i\pi}$.
    \end{enumerate}
\end{theorem}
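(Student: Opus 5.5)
Write $A=R_{c\pi}$ and $B=R_{i\pi}$. Both are $\kappa\times\kappa$ filters with exactly one entry equal to $1$, at positions $a$ and $b$. Their remaining $\kappa^2-1$ entries are i.i.d.\ uniform on $[0,p_u]$, and the two filters are independent. Rotating by $\pi$ preserves this structure, so $a$ and $b$ are independent and uniform on the $\kappa^2$ positions. I will use full cross-correlation, $(A\star B)[s]=\sum_{x}A[x]B[x+s]$ over all overlapping positions, with shift $s$ ranging in $\{-(\kappa-1),\dots,\kappa-1\}^2$ and the center being $s=0$. The plan is a deterministic bound that separates the ``aligned ones'' shift from every other shift. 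Items 1--2 follow from that bound, and items 3--4 are short probability computations.

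\textbf{Step 1 (deterministic peak location).} Fix a shift $s$. The sum $(A\star B)[s]$ has at most $\kappa^2$ products. I classify them by how many of the two factors are the special entry $1$.
\begin{itemize}
\item If $s=b-a$, the product $A[a]B[b]=1\cdot1=1$ appears. All other terms are nonnegative, so $(A\star B)[b-a]\ge 1$.
\item If $s\neq b-a$, at most two terms contain a $1$: the term $A[a]B[a+s]$ and the term $A[b-s]B[b]$. Each of these is at most $p_u$. The remaining at most $\kappa^2-2$ terms are each at most $p_u^2$. Hence $(A\star B)[s]\le 2p_u+(\kappa^2-2)p_u^2<1$ by the hypothesis.
\end{itemize}
So the maximum is attained uniquely at $s=b-a$, the shift at which the two ones coincide. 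This proves item 2. Taking $B=A$ gives $b=a$, so the peak is at $s=0$, the center, which proves item 1.

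\textbf{Step 2 (probability the peak is central).} By Step 1, the peak of $A\star B$ lies at the center exactly when $a=b$. Since $a$ and $b$ are independent and uniform over $\kappa^2$ positions, $\Pr[a=b]=1/\kappa^2$. This proves item 3. The main thing to check here is that the random generation procedure in \cite{sadasivan2023cuda} really places the $1$ uniformly and independently in each filter.

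\textbf{Step 3 (expected peaks).} For the autocorrelation, the peak is at the center:
\[
(A\star A)[0]=\textstyle\sum_x A[x]^2=1+\sum_{x\neq a}A[x]^2 .
\]
Its expectation is $1+(\kappa^2-1)\,\mathbb{E}[U^2]=1+(\kappa^2-1)p_u^2/3$, where $U\sim\mathrm{Unif}[0,p_u]$.

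For the cross-correlation, the peak value is
\[
(A\star B)[b-a]=1+\textstyle\sum_{x\neq a}A[x]B[x+b-a].
\]
Every product in this sum pairs a non-special entry of $A$ with a non-special entry of $B$. Indeed, $x+b-a=b$ forces $x=a$, so no other term can touch $B$'s $1$. The two entries in each product are independent, so each term has expectation at most $(p_u/2)^2=p_u^2/4$. The sum also has at most $\kappa^2-1$ terms, because shifting by $b-a$ can push some positions outside the window. Hence
\[
\mathbb{E}[\text{peak of }A\star B]\le 1+(\kappa^2-1)p_u^2/4<1+(\kappa^2-1)p_u^2/3 ,
\]
which proves item 4. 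I would condition on $a$ and $b$ first so that the independence argument is clean.

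The main obstacle I expect is Step 1's term count. I must confirm that for $s\neq b-a$ at most two products involve a $1$ and that none involves both. This holds because both ones appear in the same product only if $a+s=b$. Boundary truncation in the full correlation only removes nonnegative terms, so it does not affect the upper bound.
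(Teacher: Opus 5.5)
Your proof is correct and follows essentially the same route as the paper: a deterministic bound showing the shift where the two ones align has value at least $1$ while every other shift is at most $2p_u+(\kappa^2-2)p_u^2<1$, then $\Pr[a=b]=1/\kappa^2$, and finally a comparison of $1+(\kappa^2-1)p_u^2/3$ with $1+(\kappa^2-1)p_u^2/4$. Your Step 3 is slightly more careful than the paper: the paper states the cross-correlation expectation as an equality taken from the full-overlap case, whereas you condition on $a,b$ and correctly obtain it as an upper bound that accounts for boundary truncation.
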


Figure \ref{fig_ri} presents two CUDA filters along with the corresponding $R_{c \pi} \star R_{c \pi}$ and $R_{c \pi} \star R_{i \pi}$ to illustrate these statistical properties. In practice, CUDA utilizes $p_u=0.06$ and $\kappa=9$ for ImageNet, which fulfills $1 > (\kappa^2 - 2) p_u^2 + 2p_u$. Similar properties also apply on CIFAR-10 (see Appendix \ref{subsec:Proof_CIFAR10}). These properties indicate that $R_{c \pi} * R_c$ in Eq. \ref{ep6} retains more information in $X_{c \pi} * \frac{\partial f(u_{p,c}) } {\partial u_{p,c}}$ but $R_{c \pi} * R_i$ shifts the information in $X_{i \pi} * \frac{\partial f(u_{p,i})} {\partial u_{p,i}}$ to a different position with a probability of $(\kappa^2 - 1) / \kappa^2$. For $\kappa=9$, the probability is $0.988$. Consequently, the upper layers receive class $c$ information in the correct position, but information from other classes is shifted to incorrect positions. For $p_u=0.06$, we note the peak value of $R_{c \pi} \star R_{i\pi}$ is smaller than the sum of its non-peak values (estimated mean ratio of 0.099). This means that non-peak values also play an important role in degrading other class information. This blurring effect also affect the $c$-class, though to a lesser extent. More clearly, the estimated mean ratio of the peak value of $R_{c \pi} \star R_{c\pi}$ over the sum of its non-peak values is 0.106. Combining these two effects, CUDA filters utilize peak shifts and unique blur patterns to introduce class-wise bias in the poisoned class. 

\begin{figure}[h]
\vspace{-0.5cm}
  \begin{center}
    \includegraphics[width=\textwidth]{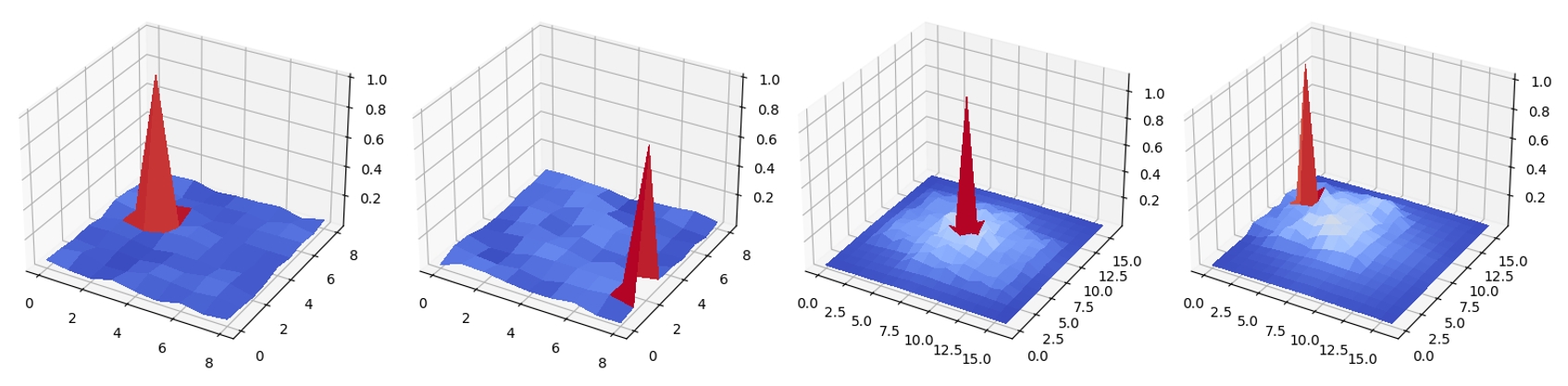}
  \end{center}
  \vspace{-0.2cm}
  \caption{Two CUDA filters $R_{c \pi}$ and $R_{i\pi}$(the first and second plots) along with the corresponding $R_{c \pi} \star R_{c \pi}$ (the third plot) and $R_{c \pi} \star R_{i \pi}$ (the last plot). }
  \label{fig_ri}
  \vspace{-1cm}
\end{figure}

\subsection{Poisoning with Imperfect Restoration}

The previous analysis shows that CUDA filters create class-wise bias, and that $R_{c\pi} \star R_{c\pi}$ retains the training information of class $c$. When $X_c \star R_c$ is closer to $X_c$, it is expected to preserve the class $c$ information better and enhance the image quality. However, if $X_c \star R_c = X_c$, then there is no poisoning effect. Thus, we propose the Imperfect Restoration Poison (IRP). Let $X_{ci}$ be one of the class $c$ images, where $i\in{1,\cdots,n}$. IRP first applies CUDA filter $R_c$ on all class $c$ training images and obtains $X_{ci} \star R_c$. Then, the filtered images are cropped into patches with the size same as the CUDA filter i.e., $\kappa \times \kappa$, denoted as $(X_{ci} \star R_c)_j$. The pixel value of $X_{ci}$ corresponding to the center of the patch is denoted $y_{cij}$, and the patch is reshaped into a column vector denoted as $\eta_{cij}$. Finally, a class-wise linear model $\alpha_c \in \mathbb{R}^{\kappa^2 \times 1}$ and the mean square loss are used to estimate the value of $y_{cij}$. More precisely, the class-wise model $\widehat{\alpha_c}$ is obtained by minimizing 
\begin{equation}
    \widehat{\alpha_c}= \underset{\alpha_c}{\text{argmin}} \sum_i \sum_j \| y_{cij} - \alpha_c^T \eta_{cij} \|_2^2
\end{equation}
Once $\widehat{\alpha_c}$ is obtained, it is reshaped back to a $\kappa \times \kappa$ filter, denoted as $A_c$. Finally, IRP uses $X_{ci} \star P_c$ to perform poisoning, where $P_c = R_c \star A_c$. Theorem \ref{theorem2} establishes that IRP is incapable of achieving perfect restoration (proof in Appendix \ref{subsec:Proof_Thm2}).
\begin{theorem}
\label{theorem2}
Given a CUDA filter size larger than 1 and $x_{cij}$ as the vector form of the patch in $X_{ci}$ involved in producing $\eta_{cij}$ (i.e., $\eta_{cij} = \wp x_{cij}$, where $\wp$ is a $\kappa^2 \times (3\kappa-2)^2$ matrix), if the column vectors of $\wp$, excluding the one corresponding to $y_{cij}$, span $\mathbb{R}^{\kappa^2}$, then, $\exists x_{cij} \in \mathbb{R}^{(3\kappa-2)^2}$ such that $y_{cij} \neq a_c^T \wp x_{cij}$.
\end{theorem}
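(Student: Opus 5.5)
Write $y_{cij} = e^T x_{cij}$, where $e \in \mathbb{R}^{(3\kappa-2)^2}$ is the standard basis vector selecting the pixel of $X_{ci}$ at the centre of the patch. The claim says that the linear functional $x \mapsto y - a_c^T \wp x = (e - \wp^T a_c)^T x$ is not identically zero. So the proof reduces to showing that, for every $a_c \in \mathbb{R}^{\kappa^2}$,
\[
e \neq \wp^T a_c .
\]
Once this holds, the vector $x_{cij} = e - \wp^T a_c$ (or any vector not orthogonal to it) gives $(e-\wp^T a_c)^T x_{cij} = \|e-\wp^T a_c\|^2 > 0$, which is the required inequality $y_{cij} \neq a_c^T\wp x_{cij}$.

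To prove $e \neq \wp^T a_c$, I would argue by contradiction. Suppose $\wp^T a_c = e$ and read this equation coordinate by coordinate. Let $w_k \in \mathbb{R}^{\kappa^2}$ denote the $k$-th column of $\wp$, and let $k_0$ be the index of the centre pixel. Then the equation says $w_k^T a_c = 0$ for every $k \neq k_0$, and $w_{k_0}^T a_c = 1$. By hypothesis the columns $\{w_k\}_{k\neq k_0}$ span $\mathbb{R}^{\kappa^2}$, so $a_c$ is orthogonal to all of $\mathbb{R}^{\kappa^2}$. Hence $a_c = 0$, which contradicts $w_{k_0}^T a_c = 1$. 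This finishes the argument. The assumption that the filter size exceeds $1$ is what makes the spanning hypothesis possible: there are then $(3\kappa-2)^2 - 1 \ge \kappa^2$ remaining columns. When $\kappa = 1$ there are no other columns and perfect restoration is trivially possible.

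The only delicate step is setting up the indexing correctly, namely identifying $y_{cij}$ as a single coordinate of $x_{cij}$ and checking that this coordinate corresponds to a column of $\wp$. I would justify this as follows. $\eta_{cij}$ is a $\kappa\times\kappa$ patch of $X_{ci}\star R_c$, and each entry of it depends on a $\kappa\times\kappa$ window of $X_{ci}$. The union of these windows is therefore a $(2\kappa-1)\times(2\kappa-1)$ region, which is contained in the stated $(3\kappa-2)\times(3\kappa-2)$ support. This region contains the centre pixel, so $e$ is indeed a coordinate vector of that support. Padding $\wp$ with zero columns does not affect the argument, because the spanning hypothesis concerns only the nonzero structure. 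The spanning assumption itself I would take as given, as the statement does, and not attempt to verify it from the random entries of $R_c$.
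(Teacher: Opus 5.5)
Your proof is correct and follows essentially the same route as the paper: perfect restoration for every input would force $\wp^T a_c = e$, i.e.\ $a_c$ orthogonal to all columns of $\wp$ other than the one for $y_{cij}$, which the spanning hypothesis forbids. Your write-up is slightly more careful than the paper's, since you rule out $a_c=0$ through the coordinate corresponding to $y_{cij}$ and give the explicit witness $x_{cij}=e-\wp^T a_c$. Your remark that the true support is $(2\kappa-1)\times(2\kappa-1)$ also agrees with the appendix version of the theorem, which uses $(2\kappa-1)^2$ columns.
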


\begin{figure}[h]
\vspace{-0.4cm}
  \begin{center}
    \includegraphics[width=\textwidth]{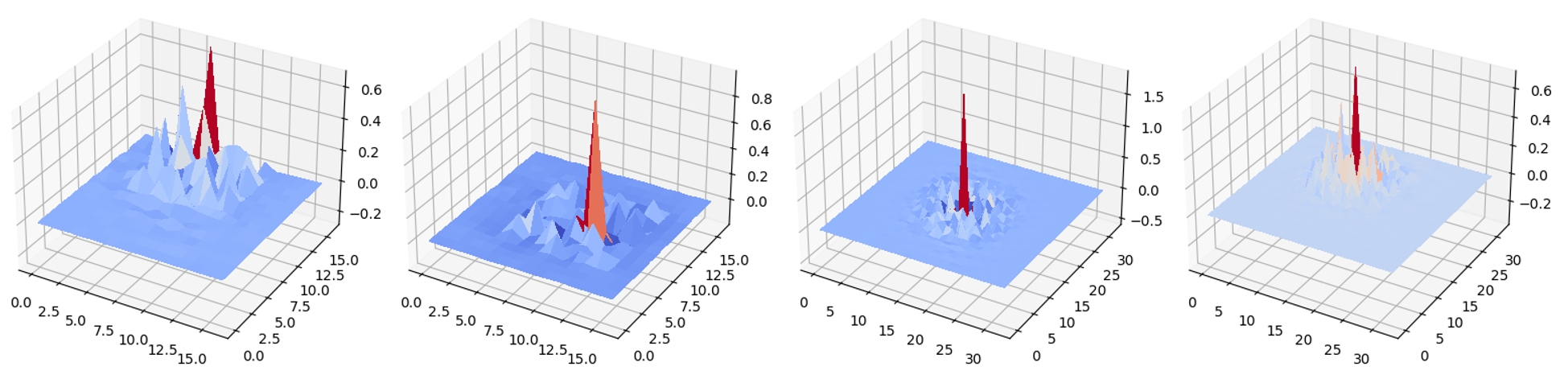}
  \end{center}
  \caption{Two IRP filters $P_{c\pi}$ and $P_{i\pi}$ (the first and second plots) along with the corresponding $P_{c\pi} \star P_{c\pi}$ (the third plot) and $P_{c\pi} \star P_{i\pi}$ (the last plot).}
  \label{fig_ari}
\end{figure}

Fig. \ref{fig_ari} shows two IRP filters with the corresponding self-correlation and cross-correlation. Similar to CUDA, we observe that $P_{c\pi} \star P_{c\pi}$ exhibits a peak at the center, while $P_{c\pi} \star P_{i\pi}$ has a peak in a different position. However, they have more complex patterns around the peaks. Therefore, like CUDA, IRP exploits peak shifts to create class-wise bias. However, IRP filters have turbulent patterns around the peaks, creating unique high-frequency attack patterns that are more complex than the low-frequency patterns from CUDA. Due to their dependence on the training data, it would be challenging to theoretically analyze IRP filters, as was done for CUDA, without proper assumptions on the data distribution. We refrain from imposing unrealistic assumptions on the data distribution (e.g., Gaussian) as this could lead to misleading analytic results. 

Fig. \ref{cuda_irp} displays some images generated by IRP and CUDA. IRP images are visually clearer and maintain higher quality. Notably, the man's face is sharper and the spots of the salamander and stingray are easily distinguishable under IRP protection. The backgrounds of all three images are sharper under IRP.

\begin{figure}[h]
  \centering
  \begin{subfigure}[b]{0.495\textwidth}
    \includegraphics[width=\textwidth]{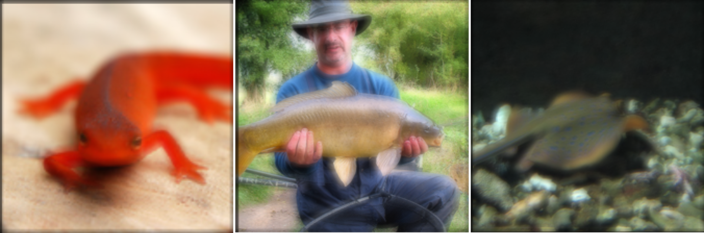}
    \caption{CUDA}
    \label{cuda_fig}
  \end{subfigure}
  \begin{subfigure}[b]{0.495\textwidth}
    \includegraphics[width=\textwidth]{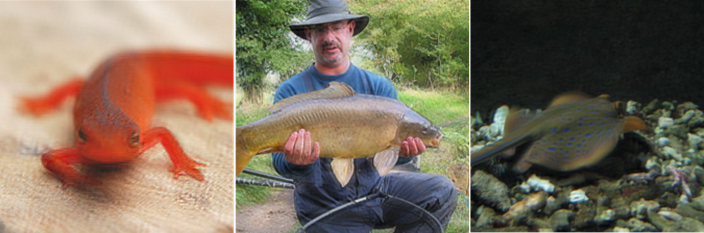}
    \caption{IRP}
    \label{irp_fig}
  \end{subfigure}
  \caption{Poisoned examples generated by CUDA (left) and IRP (right).}
  \label{cuda_irp}
\vspace{-0.4cm}
\end{figure}


\section{Experiments}

In this section, we begin by evaluating the effectiveness of IRP alongside eight DAA methods in standard SL and SSL settings, as well as against representative defense methods. We then investigate the effectiveness of different DAAs under partial poisoning scenarios. For model-reliant methods, our experiments primarily focus on the nominal setup, wherein a surrogate model generates poison images. To assess the transferability of IRP, we evaluate IRP across six different architectures on three datasets. Finally, we compare the image quality generated by CUDA and IRP and ablate different filter types across various training methods.

\subsection{Experimental Setup}

\textbf{Datasets:}
We examine poisoning on CIFAR-10 \cite{krizhevsky2009learning}, CIFAR-100 \cite{krizhevsky2009learning}, STL-10 \cite{coates2011analysis}, and ImageNet-100, a subset of ImageNet \cite{deng2009imagenet} consisting of 100 classes. By default, we use 50,000 images for training and 10,000 images for testing on CIFAR-10 and CIFAR-100. For STL-10, we use their default 5000 training images and 8000 test images for training and testing, respectively. For ImageNet-100, we train on the images from the first 100 classes of the official training set and test on all corresponding images from the official validation set.

\par\noindent\textbf{Poison Settings:} 
We compare IRP with eight state-of-the-art DAAs, including four model-reliant methods, including EM \cite{huang2021unlearnable}, TAP \cite{fowl2021adversarial}, REM \cite{fu2022robust}, and CP \cite{he2023indiscriminate}, and four model-free methods, including CUDA \cite{sadasivan2023cuda}, LSP \cite{yu2022availability}, OPS \cite{wu2022one}, and AR \cite{sandoval2022autoregressive}.
As generating poison data using model-reliant methods is time-consuming on ImageNet, we generate a 20\% subset of ImageNet-100 for CP, TAP, and EM. We maintain all default settings and retain the $L_{\infty}$ norm of the defensive perturbation at 8/255. We utilize the full REM-poisoned ImageNet-100 dataset shared by the REM authors. According to \cite{fu2022robust}, the $L_{\infty}$ norm of the defensive perturbation $\rho_u$ is set to $8/255$, and the adversarial perturbation radius $\rho_a$ is set to $4/255$ for controlling the level of protection of the noise against adversarial training. For model-free methods, we maintain the settings as per their original papers, except for LSP and OPS on ImageNet-100. As the original settings could not yield a satisfactory poisoning effect, we amplify the perturbation of LSP and OPS to $L_{\infty}=16/255$ and $L_0=5$, respectively. As AR does not report its effectiveness on ImageNet-100 and has shown to be vulnerable to SSL and AT on CIFAR-10, we opt not to include AR in further testing on ImageNet-100.

\textbf{Models:}
Unless stated otherwise, we employ ResNet-18 \cite{he2016deep} as the surrogate for model-reliant methods and as the target models for all the testing methods in both SL and SSL scenarios. Various SSL algorithms are employed, including SimCLR \cite{chen2020simple}, SimSiam\cite{chen2021exploring}, MoCoV3\cite{chen2021mocov3} and BYOL\cite{grill2020bootstrap}. Linear probing accuracy, consistent with CP \cite{he2023indiscriminate}, is utilized to evaluate the effectiveness of DAAs in SSL. To assess transferability, we evaluate IRP on target models with various architectures, including ResNet-18, ResNet-34 \cite{he2016deep}, VGG-19 \cite{simonyan2014very}, DenseNet-121 \cite{huang2017densely}, MobileNet-V2 \cite{sandler2018mobilenetv2}, and ViT \cite{dosovitskiy2020image}. Training details are deferred to Appendix \ref{subsec:Training_Details}.

\subsection{Comparisons to Baseline DAAs}
\label{baslines}

\begin{table}[t]
 \caption{Top-1 clean test accuracies (\%) of ResNet-18 trained with data poisoned by various DAAs on CIFAR-10. The ``Max'' column represents the highest clean test accuracy, which corresponds to the worst performance of each DAA under different training scenarios. Bold indicates the best performance, and underline denotes the second-best performance.}
\label{tab:compare_cifar}
\centering
\begin{tabularx}{0.9\textwidth}{l| *{2}{Y}|*{5}{Y}|c}
\toprule
DAA  & SL & SSL & AT & Cutout & Cutmix & Mixup & ISS & \multicolumn{1}{c}{Max} \\
  \midrule
Clean & 93.86 & 90.55 & 89.57 & 95.62 & 95.78 & 95.46 & 82.40 & 95.62 \\
\midrule
EM & 32.53 & 88.45 & 89.31 & 36.18 & 38.40 & 54.19 & 82.12 & 89.31 \\
REM & 18.91 & 86.70 & 34.45 & 19.67 & 27.46 & 23.27 & 80.04 & 86.70\\
AP & 12.45 & 75.77 & 85.67 & \textbf{9.70} & \textbf{8.38} & 33.27 & 81.10 & 85.67 \\
CP & 93.59 & 73.08 & 88.43 & 94.51 & 93.77 & 93.36 & 82.21 & 94.51 \\
LSP & 21.47 & 88.87 & 85.67 & 18.47 & 21.68 & 22.76 & 79.33 & 88.87  \\
OPS & 27.24 & 87.79 & \textbf{20.10} & 65.12 & 85.27 & 37.56 & 72.14 & 87.79\\
AR & \underline{10.95} & 86.86 & 72.94 & 12.17 & 12.95 & \textbf{14.15} & 82.83 & 86.86 \\
CUDA & 21.89 & \underline{66.58} & 48.58 & 23.46 & 24.04 & 21.75 & \textbf{21.94} & \underline{66.58}\\
IRP & \textbf{10.39} & \textbf{43.24} & \underline{32.21} & \underline{10.85} & \underline{15.21} & \underline{16.03} & \underline{29.42} & \textbf{43.24 } \\
\bottomrule
\end{tabularx}
\end{table}

\textbf{Effectiveness on SL and SSL:}
We first evaluate IRP and the other DAAs on standard SL and SSL training scenarios on CIFAR-10 and ImageNet-100. SimCLR is utilized for model pretraining in the SSL evaluation. Columns 2-3 of Tables \ref{tab:compare_cifar} and \ref{tab:compare_IN} show that IRP achieves state-of-the-art poisoning, enforcing the lowest clean test accuracies in both SL and SSL and for both CIFAR-10 and ImageNet-100.

We begin by analyzing results for CIFAR-10 in Table \ref{tab:compare_cifar}. For SL with CIFAR-10, AR and IRP demonstrate a similar level of poisoning effect under SL, reducing the test accuracy to around 10\%. However, AR underperforms on SSL, permitting a significantly higher accuracy of 86.86\% compared to that of IRP at 43.24\%. CUDA is the second-best DAA for SSL with an accuracy of 66.58\%, which is still drastically higher than that of IRP. 

For ImageNet-100 results in Table \ref{tab:compare_IN}, we begin by nothing the drop in effectiveness of OPS relative to CIFAR-10. This is attributed to the RandomResizeCrop augmentation utilized for training models on ImageNet-100. We chose to use this augmentation on ImageNet-100 as it is commonly used for SL and SSL on ImageNet-100 in literature and it enhances the effectiveness of all the defense methods. For SL with ImageNet-100, CUDA, EM, AP, and LSP demonstrate strong poisoning capabilities, reducing the clean test accuracy to below 10\%. Even so, IRP outperforms all others with a test accuracy of 1.98\%. For SSL with ImageNet-100, the test accuracies of most DAAs remain high, except for CUDA at 21.89\% and CP at 18.76\%. IRP outperforms both CUDA and CP, achieving a clean test accuracy of 9.30\%. These results establish IRP as the most effective DAA under both SL and SSL scenarios.

\begin{table}[t]
 \caption{Top-1 clean test accuracies (\%) of ResNet-18 trained with data poisoned by various DAAs on ImageNet-100. The ``Max'' column represents the highest clean test accuracy, which corresponds to the worst performance of each DAA under different training scenarios. Bold indicates the best performance, and underline denotes the second-best performance.}
   \label{tab:compare_IN}
  \centering
\begin{tabularx}{0.95\textwidth}{l| *{2}{Y}|*{5}{Y}|c}
\toprule
DAA & SL & SSL & AT & Cutout & Cutmix & Mixup & ISS & \multicolumn{1}{c}{Max} \\
\midrule
Clean & 77.66 & 70.96 & 69.76 & 78.04 & 81.08 & 80.38 & 71.58 & 80.38 \\
\midrule
EM & 6.32 & 50.02 & 43.26 & 6.42 & 5.80 & 12.38 & 41.18 & 50.02 \\
REM & 14.18 & 65.30 & 57.34 & 15.60 & 16.10 & 33.08 & 67.52 & 67.52 \\
AP & 8.18 & 41.80 & 42.88 & 7.68 & 8.38 & 9.68 & 24.22 & 42.88 \\
CP & 57.46 &\underline{18.76} & 48.76 & 58.20 & 62.28 & 61.76 & 50.18 & 62.28 \\
LSP & 6.62 & 62.06 & \underline{28.92} & \underline{5.06} & 7.84 & 5.50 & 33.70 & 62.06 \\
OPS & 51.00 & 62.22 & 52.56 & 53.86 & 65.12 & 48.82 & 57.36 & 65.12 \\
CUDA & \underline{6.10} & 26.12 & 36.34 & 8.66 & \underline{5.70} & \underline{5.16} & \textbf{3.58} & \underline{36.34} \\
IRP & \textbf{1.98} & \textbf{9.30} & \textbf{22.10} & \textbf{2.18} & \textbf{1.52} & \textbf{2.02} & \underline{4.14} & \textbf{22.10} \\
\bottomrule
\end{tabularx}
\vspace{-0.6cm}
\end{table}

\noindent\textbf{Effectiveness under Common Countermeasures:} 
To further evaluate the effectiveness of IRP under defensive scenarios, we test IRP and the baselines on representative defense methods, including Adversarial Training (AT) \cite{goodfellow2014explaining}, Image Shortcut Squeezing (ISS) \cite{liu2023image}, and common data augmentation techniques such as Cutout, CutMix, and Mixup. Following \cite{fu2022robust}, we employ the $L_{\infty}$ norm for AT and set the norm bound to 4/255. 10-step PGD with a step size of 0.6/255 is used for AT. As suggested by \cite{liu2023image}, we employ a combination of grayscale and JPEG with the default JPEG quality factor (JPEG-10) for ISS to ensure global effectiveness against all DAAs. The results on CIFAR-10 and ImageNet-100 are presented in columns 4-8 of Tables \ref{tab:compare_cifar} and \ref{tab:compare_IN}, respectively. The last column displays the maximum clean test accuracy of the corresponding methods across all testing scenarios, representing the worst performance among all the test cases. 

DAA results on CIFAR-10 in Table \ref{tab:compare_cifar} vary across defenses. For instance, OPS shows high effectiveness on AT but is susceptible to ISS and CutMix, while AP performs well against CutMix and Cutout but is less effective against AT and ISS. Only CUDA and IRP demonstrate effectiveness across all countermeasures. However, CUDA permits maximum clean accuracies of 48.58\% across all countermeasures and 66.58\% across all testing cases, which are 16.37\% and 23.24\% higher, respectively, than the 43.24\% maximum accuracy allowed by IRP. 

Similarly, CUDA and IRP emerge as the two most effective methods across all defenses on ImageNet-100 in Table \ref{tab:compare_IN}. However, CUDA permits a maximum clean accuracy of 36.34\%, significantly higher than that of IRP at 22.10\%. The results demonstrate that IRP is the most effective DAA in all testing scenarios.


\noindent\textbf{Evaluation on Other SSL Algorithms:}
To further assess the efficacy of IRP on SSL, we test IRP-poisoned ImageNet-100 on 
three additional SSL algorithms: SimSiam\cite{chen2021exploring}, MoCoV3\cite{chen2021mocov3}, and BYOL\cite{grill2020bootstrap}. As shown in Table \ref{ssl_acc}, IRP lowers the test accuracies of all SSL algorithms to around 10\%. Even in its the worst case performance with BYOL, IRP reduces clean accuracy to 11.6\%. 
\begin{table}[]
\centering
  \caption{Top-1 accuracies (\%) of ResNet-18 trained with different SSL algorithms on ImageNet-100.}
  \label{ssl_acc}
\begin{tabularx}{0.7\textwidth}{l| *{4}{Y}|c}
\toprule
 & SimCLR & MoCoV3 & SimSiam & BYOL & \multicolumn{1}{c}{Max} \\
 \midrule
Clean & 70.96 & 75.54 & 70.68 & 75.34 & 75.54 \\
IRP & 9.3 & 10.58 & 7.82 & 11.6 & 11.6 \\
\bottomrule
\end{tabularx}
\end{table}

\subsection{Additional Training Scenarios and Datasets}

\textbf{Different Poison Percentages:}
In this section, we first evaluate IRP on a more challenging and realistic learning scenario, where only a portion of the data is shielded by IRP. We randomly select a portion of the training data from the entire training dataset of CIFAR-10 and apply IRP to the selected subset. The poisoned data is denoted as $D_p$ and the clean data in the training dataset that has not been used for generating poison data is denoted as $D_c$. Then, we conduct standard supervised training with ResNet-18 on the mixed data ($D_{p+c}$) consisting of the $D_p$ and $D_c$, as well as on the clean data subset $D_c$. The results in Table \ref{tab:partial_poison} show that the effectiveness drops quickly when the data are not 100\% poisoned. This phenomenon applies for all other DAAs as well. Models trained only on $D_c$ demonstrate a similar level of performance as the models trained with $D_{p+c}$. This suggests that the high performance for <100\% poisoning is not due to a failure of the DAAs. Finally, works on DAAs for artwork protection \cite{shan2023glazeprotectingartistsstyle, shan2024nightshadepromptspecificpoisoningattacks} have noted similar trends but find that individuals can still protect their own data via DAA even if the majority of artworks are unprotected. That is, it's still possible to make subsets of the training data unlearnable.

\begin{table}[h]
\caption{The top-1 clean test accuracies (\%) of ResNet-18 trained with partial poisoned data on CIFAR-10. The percentage in the first row indicates the percentage of poison data, and $D_c$ indicates the remaining cleaning data. 0\% and 100\% signify training scenarios where the entire dataset remains either clean or fully poisoned, respectively.}
\label{tab:partial_poison}
\centering
\resizebox{\textwidth}{!}{
\begin{tabularx}{\textwidth}{l|Y| *{2}{Y}|*{2}{Y}|*{2}{Y}|*{2}{Y}|Y}
\toprule
  &   \multirow{2}{*}{0\%} & \multicolumn{2}{c|}{20\%} & \multicolumn{2}{c|}{40\%} & \multicolumn{2}{c|}{60\%} & \multicolumn{2}{c|}{80\%} & \multirow{2}{*}{100\%} \\
  &  & \multicolumn{1}{Y}{$D_{p+c}$} & \multicolumn{1}{Y|}{$D_c$} & \multicolumn{1}{Y}{$D_{p+c}$} & \multicolumn{1}{Y|}{$D_c$} &\multicolumn{1}{Y}{$D_{p+c}$} & \multicolumn{1}{Y|}{$D_c$}& \multicolumn{1}{Y}{$D_{p+c}$} & \multicolumn{1}{Y|}{$D_c$} &  \\
 \midrule
EM & \multirow{8}{*}{93.86} & 94.02 & \multirow{8}{*}{92.38} & 92.72 & \multirow{8}{*}{91.74} & 90.71 & \multirow{8}{*}{87.84} & 85.75 & \multirow{8}{*}{77.36} & 32.53 \\
REM &  & 92.85 &  & 92.15 &  & 90.05 &  & 86.29 &  & 18.91 \\
AP &  & 92.57 &  & \textbf{91.16} &  & 90.72 &  & 88.11 &  & 12.45 \\
LSP &  & 92.95 &  & 92.40 &  & 90.19 &  & 85.76 &  & 21.47 \\
OPS &  & 93.40 &  & 92.25 &  & 90.63 &  & 86.68 &  & 27.24 \\
AR &  & \textbf{92.98} &  & 92.66 &  & 90.94 &  & 87.93 &  & 10.95 \\
CUDA &  & 93.27 &  & 92.27 &  & 90.51 &  & 86.90 &  & 21.89 \\
IRP &  & 93.07 &  & 92.56 &  & \textbf{89.89} &  & \textbf{85.70} &  & \textbf{10.39}\\
\bottomrule
\end{tabularx}
}
\end{table}

\begin{table}[]
\centering
\caption{The top-1 clean test accuracies (\%) of different models trained on data poisoned by IRP and CUDA with AT.}
\label{tab:diff_net}
\resizebox{0.85\textwidth}{!}{
\begin{tabularx}{0.9\textwidth}{l|*{7}{Y}|Y}
\toprule
Dataset &  & RN18 & RN34 & VGG19 & DN121 & MNv2 & ViT-S & Max\\
\midrule
\multirow{3}{*}{CIFAR10} & Clean & 89.57 & 89.32 & 78.57 & 79.97 & 75.94 & 47.84  & 89.57 \\
 & CUDA & 48.58 & 43.63 & 42.02 & 49.44 & 26.33 &  44.93 & 49.44 \\
 & IRP & \textbf{32.21} & \textbf{17.37} & \textbf{19.06} & \textbf{17.87} & \textbf{14.13} &  \textbf{35.39}  & \textbf{35.39}  \\
 \midrule
\multirow{3}{*}{CIFAR100} & Clean & 65.30 & 66.43 & 48.50 & 58.92 & 47.00 & 27.19 & 66.43 \\
 & CUDA & 37.21 & 37.95 & 27.44 & \textbf{23.07} & \textbf{16.00} & 24.88 & 37.95 \\
 & IRP & \textbf{27.13} & \textbf{28.85} & \textbf{22.50} & 25.24 & 20.70 & \textbf{20.82} & \textbf{28.85} \\
 \midrule
\multirow{3}{*}{STL10} & Clean & 71.94 & 69.73 & 64.35 & 72.41 & 58.93 & 43.86 & 72.41 \\
 & CUDA & 44.70 & 44.35 & 45.99 & 39.39 & 44.49 & 42.14 & 45.99 \\
 & IRP & \textbf{34.10} & \textbf{34.33} & \textbf{33.89} & \textbf{33.19} & \textbf{33.81} & \textbf{35.58} & \textbf{35.58}\\
\bottomrule
\end{tabularx}
}
\end{table}

\noindent\textbf{Effectiveness on Different Models:} 
The previous evaluations are all conducted on ResNet-18 (RN18). To evaluate the effectiveness of IRP on different architectures, we evaluate ResNet-34 (RN34), VGG19, DenseNet121 (DN121), MobileNetv2 (MNv2), and ViT-S on IRP-poisoned CIFAR-10. All models are trained with AT ($L_{\infty}$ constraint of $4/255$, as before), which emerged as the strongest defense against different DAAs from our experiments in Section \ref{baslines}. We include CUDA in this evaluation as a reference. On all the examined networks, IRP outperforms CUDA by a significant margin. In their respective worst cases, CUDA permits a test accuracy of 49.44\%, while IRP allows a test accuracy of 35.39\%. Table \ref{tab:diff_net} shows that even under AT defense, IRP is still effective on all the examined architectures.

\noindent\textbf{Evaluation on Different Datasets:}
We extend our evaluation to the CIFAR-100 and STL-10 datasets. The same set of models and AT settings are utilized. As shown in Table \ref{tab:diff_net}, both IRP and CUDA demonstrate effectiveness on CIFAR-100 and STL-10 across all the network architectures under AT. IRP consistently outperforms CUDA by a large margin on most networks for both CIFAR-100 and STL-10. In the two cases where IRP is weaker than CUDA, the clean test accuracies of IRP are still within 5\% of those of CUDA. The maximum clean test accuracies of IRP on CIFAR-100 and STL-10 are 28.85\% and 35.58\%, respectively. Both of these values are significantly lower than those of CUDA, which are 37.95\% and 45.99\% on CIFAR-100 and STL-10, respectively. This experiment demonstrates that IRP is universally effective across networks and across different datasets.

\subsection{Image Quality Comparison against CUDA}


We compare the quality of ImageNet-100 images poisoned by IRP and CUDA. We quantify image quality with three common full-reference quality indices, including LPIPS \cite{zhang2018unreasonable}, SSIM \cite{wang2004image}, and MS-SSIM \cite{wang2003multiscale}, and two no-reference quality indices, including CLIP-IQA \cite{wang2023exploring} and BRISQUE \cite{mittal2012no}. The original image is used as the reference for LPIPS, SSIM, and MS-SSIM. Table \ref{tab:im_quality} presents the evaluation results of all quality indices. Both Table \ref{tab:im_quality} and Fig. \ref{cuda_irp} demonstrate that IRP generates poisoned images with better image quality than CUDA.

\begin{figure}
  \begin{minipage}[b]{.4\linewidth}
    \centering
    \begin{tabularx}{1.0\textwidth}{lccc}
        \toprule
        Metric & Clean & CUDA & IRP \\
        \midrule
        LPIPS $\downarrow$ & - & 0.272 & 0.142 \\
        SSIM $\uparrow$& - & 0.561 & 0.775 \\
        MS-SSIM$\uparrow$ & - & 0.831 & 0.903 \\
        CLIP-IQA$\uparrow$& 0.787 & 0.528 & 0.706 \\
        BRISQUE$\downarrow$ & 16.625 & 29.570 & 17.318 \\
        \bottomrule
    \end{tabularx}
    \captionof{table}{ImageNet-100 image quality under different quality indices.}
    \label{tab:im_quality}
  \end{minipage}
  \hfill
  \begin{minipage}[b]{.5\linewidth}
    \centering
    \includegraphics[width=1.0\textwidth]{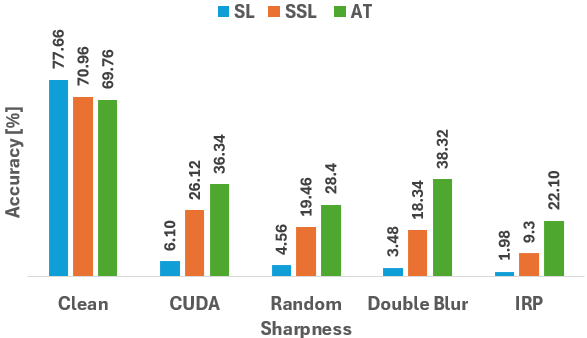}
    \captionof{figure}{The Top-1 accuracy of different setting on ImageNet-100.}
    \label{ablation}
  \end{minipage}
\end{figure}


\subsection{Filter Ablation}

The IRP filter $P_c=R_c \star A_c$ is a kernel with a larger size of $(2\kappa+1)\times(2\kappa+1)$, where $R_c$ and $A_c$ are both filters of size $\kappa\times \kappa$. To verify that the high effectiveness of IRP is not caused by the bigger kernel size, we generate filters of the same size as the IRP under two different filter settings. For `Random Sharpness', we generate a random sharpness filter $A_{c1}$ of size $\kappa \times \kappa$ for each class. Within each $A_{c1}$, one random parameter out of the $\kappa \times \kappa$ parameters is set to 1, while the remaining parameters are randomly initialized from a uniform distribution with support [$-p_s$, 0]. After testing several values for $p_s$, we find that 0.01 is suitable for generating poisoned images with good visual quality. We then poison class $c$ images by $X_c \star R_{c} \star A_{c1}$. For `Double Blur', we generate another random CUDA filter $R_{c1}$ of size $\kappa \times \kappa$ for each class, and then poison class c images by $X_c \star R_{c} \star R_{c1}$. 

We compare the effectiveness of the two additional filter settings with CUDA and IRP on SL, SSL, and AT for ImageNet-100. The results in Fig. \ref{ablation} show that Random Sharpness and Double Blur exhibit slightly better performance than CUDA in SSL. However, the effectiveness of Random Sharpness and Double Blur cannot match that of IRP. Additionally, we provide examples of images generated by Double Blur and Random Sharpness in Appendix \ref{subsec:DoubleBlur_RandomSharp}. These samples highlight their inferior quality compared to those generated by IRP.



\section{Conclusion}

In this study, we conduct a theoretical analysis of CUDA, uncovering the sub-optimal gradients it introduces and elucidating the strategy it employs to induce class-wise bias for data poisoning. Building on these insights, we introduce IRP, which demonstrates high effectiveness in both SL and SSL as well as under state-of-the-art defense scenarios. IRP also maintains high image quality, rendering it suitable for data protection in real-world applications.


\clearpage
\section*{Acknowledgements}

This research is supported by the National Research Foundation, Singapore and Infocomm Media Development Authority under its Trust Tech Funding Initiative and Strategic Capability Research Centres Funding Initiative. Any opinions, findings and conclusions or recommendations expressed in this material are those of the author(s) and do not reflect the views of National Research Foundation, Singapore and Infocomm Media Development Authority.

Mingzhi Lyu and Fan Wang are supported by ROSE @ NTU, Interdisciplinary Graduate Programme, Nanyang Technological University, Singapore.

\bibliographystyle{splncs04}
\bibliography{main}

\clearpage
\input{Appendix}

\end{document}

%% file: Appendix.tex
\clearpage
\appendix{}

\section{Proofs}

\subsection{Proof of Theorem 1}
\label{subsec:Proof_Thm1}

\begin{lemma}
    \label{lemma:sq_uniform}
    Given a random variable, $X$ follows a uniform distribution with support $U(0, p_u)$. The p.d.f. of $X^2$ is $\frac{1}{2p_u \sqrt{x}}$ and  $E(X^2) = \frac{p_u^2}{3}$
\end{lemma}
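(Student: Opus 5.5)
Let $X\sim U(0,p_u)$ and $Y=X^2$. I will use the standard change-of-variables formula for a monotone transformation, and then compute the expectation directly.

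\textbf{Step 1: the distribution function of $Y$.} The map $x\mapsto x^2$ is strictly increasing on $[0,p_u]$, so $Y$ takes values in $[0,p_u^2]$. For $y$ in that interval,
\begin{equation*}
F_Y(y)=P(X^2\le y)=P(X\le \sqrt{y})=\frac{\sqrt{y}}{p_u}.
\end{equation*}
For $y<0$ we have $F_Y(y)=0$, and for $y>p_u^2$ we have $F_Y(y)=1$.

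\textbf{Step 2: the density of $Y$.} Differentiating $F_Y$ on $(0,p_u^2)$ gives
\begin{equation*}
f_Y(y)=\frac{1}{2p_u\sqrt{y}},
\end{equation*}
and $f_Y(y)=0$ outside $(0,p_u^2)$. The singularity at $y=0$ is integrable, since $\int_0^{p_u^2}\frac{dy}{2p_u\sqrt{y}}=\frac{\sqrt{p_u^2}}{p_u}=1$. This confirms that $f_Y$ is a valid density and gives the claimed form (writing $x$ for the argument, as in the statement).

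\textbf{Step 3: the expectation.} Using the density just derived,
\begin{equation*}
E(X^2)=\int_0^{p_u^2} y\cdot\frac{1}{2p_u\sqrt{y}}\,dy=\frac{1}{2p_u}\cdot\frac{2}{3}\,(p_u^2)^{3/2}=\frac{p_u^2}{3}.
\end{equation*}
As a cross-check, the law of the unconscious statistician gives the same value: $\int_0^{p_u} x^2\,\frac{dx}{p_u}=\frac{p_u^2}{3}$.

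\textbf{Main obstacle.} The only point needing care is the support. The stated density $\frac{1}{2p_u\sqrt{x}}$ holds on $(0,p_u^2]$ and is zero elsewhere; I will state this restriction explicitly. Apart from that, the argument is a routine computation.
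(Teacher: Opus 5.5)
Your proof is correct and follows the same route as the paper: compute the c.d.f. $\Pr(X^2\le x)=\sqrt{x}/p_u$, differentiate to get the density $\frac{1}{2p_u\sqrt{x}}$ on $[0,p_u^2]$, and integrate $x$ against it to obtain $p_u^2/3$. Your explicit treatment of the support matches the paper's own remark; the normalization check and the law-of-the-unconscious-statistician cross-check are harmless extras that the paper omits.
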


\begin{proof}
Considering
\begin{equation}
    \Pr(X^2 < x) = \Pr(X < \sqrt{x}) = \int_{0}^{\sqrt{x}} \frac{1}{p_u} \, dx = \frac{1}{p_u} \sqrt{x},
\end{equation}
which is the c.d.f. of $X^2$. To obtain the p.d.f. of $X^2$, we compute $\frac{d}{dx} \left( \frac{1}{p_u} \sqrt{x} \right) = \frac{1}{2p_u \sqrt{x}}
$. Note that the support of $X^2$ is $[0, p_u^2]$ not $[0, p_u]$. To obtain
\begin{equation}
    E(X^2) = \int_{0}^{p_u^2} \frac{1}{2p_u \sqrt{x}} \, x \, dx = \frac{1}{2p_u} \frac{2}{3} x^{3/2} \bigg|_{0}^{p_u^2} = \frac{1}{3p_u} p_u^3 = \frac{p_u^2}{3}
\end{equation}
q.e.d.
\end{proof}

\setcounter{theorem}{0}

\begin{theorem}
\label{theorem1}
    Given two different filters, $R_i$ and $R_c$, generated by CUDA,  if the condition $1 > (\kappa^2 - 2) p_u^2 + 2p_u$ is satisfied, 
    $R_{c\pi}\star R_{i\pi}$ and $R_{c\pi}\star R_{c\pi}$ have the following properties.
    \begin{enumerate}
        \item The peak of $R_{c\pi}\star R_{c\pi}$ is located at the center.
        \item The peak of $R_{c\pi}\star R_{i\pi}$ occurs at the position where two ones in  $R_{c\pi}$ and  $R_{i\pi}$ appear at the same position in $R_{c\pi}\star R_{i\pi}$.
        \item The probability of peak of $R_{c\pi}\star R_{i\pi}$  at the center is $\frac{1}{\kappa^2}$.
        \item  	The expected peak of $R_{c\pi}\star R_{c\pi}$  is higher than the expected peak of $R_{c\pi}\star R_{i\pi}$.
    \end{enumerate}
\end{theorem}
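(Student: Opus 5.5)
The plan is to write the full cross-correlation explicitly and bound every entry using the structure of a CUDA filter: exactly one entry equal to $1$, all other entries in $[0,p_u]$. Write $S=R_{c\pi}$, $T=R_{i\pi}$. Rotation by $\pi$ preserves this structure, and the one in $S$ sits at a uniformly random position $a$, independent of the position $b$ of the one in $T$. Use the full cross-correlation
\[
(S\star T)(\delta)=\sum_{u} S(u)\,T(u+\delta),
\]
with offsets $\delta\in\{-(\kappa-1),\dots,\kappa-1\}^2$, so the center is $\delta=0$. At the offset $\delta^*=b-a$ the two ones are aligned. There the entry equals $1+\sum_{u\neq a}S(u)T(u+\delta^*)\ge 1$, since all other products are nonnegative.

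For properties 1 and 2, I bound every other offset $\delta\neq\delta^*$. At such an offset the ones are not aligned, so each product $S(u)T(u+\delta)$ is of one of three kinds:
\begin{itemize}
\item the one of $S$ times a uniform entry of $T$, at most $p_u$;
\item a uniform entry of $S$ times the one of $T$, at most $p_u$;
\item a product of two uniform entries, at most $p_u^2$.
\end{itemize}
There are at most $\kappa^2$ overlapping terms, of which at most two involve a one. Hence
\[
(S\star T)(\delta)\le 2p_u+(\kappa^2-2)p_u^2<1\le (S\star T)(\delta^*).
\]
This inequality is exactly the hypothesis, and it gives property 2. For the autocorrelation $S\star S$ the aligned offset is $a-a=0$, the center, which gives property 1. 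The counting needs a little care: the full overlap has $\kappa^2$ terms only at $\delta=0$, and off-center there are fewer. I will bound by the worst case $\kappa^2$ throughout.

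For property 3, the peak of $S\star T$ is at the center iff $\delta^*=0$, i.e.\ iff $a=b$. Since $b$ is uniform over $\kappa^2$ positions independently of $a$, this has probability $1/\kappa^2$.

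For property 4, compare expectations of the peak values. For $S\star S$ the peak is $1+\sum_{u\neq a}S(u)^2$. Lemma~\ref{lemma:sq_uniform} gives $E[S(u)^2]=p_u^2/3$, so the expected peak is $1+(\kappa^2-1)p_u^2/3$.

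For $S\star T$ the peak is $1+\sum_{u\neq a}S(u)T(u+\delta^*)$ over the overlapping region. Each such term has expectation $(p_u/2)^2=p_u^2/4$ by independence, and at most $\kappa^2-1$ terms appear (exactly $\kappa^2-1$ only when $a=b$). So the expected peak is at most $1+(\kappa^2-1)p_u^2/4$. This is strictly less than $1+(\kappa^2-1)p_u^2/3$ whenever $\kappa>1$ and $p_u>0$, giving property 4.

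The main obstacle is the counting of overlapping terms at shifted offsets. I need the products involving a one to be at most two and all remaining terms to be bounded by $p_u^2$. I also need the conditioning on the random positions $a,b$ to be handled cleanly in the expectation, either by conditioning on $(a,b)$ and then averaging, or by using the uniform upper bound above.
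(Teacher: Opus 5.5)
Your proposal is correct and follows essentially the same route as the paper: a deterministic bound of $2p_u+(\kappa^2-2)p_u^2$ at every non-aligned offset against a value of at least $1$ at the aligned offset for properties 1--2, the uniform and independent placement of the ones for property 3, and Lemma~\ref{lemma:sq_uniform} together with independence of the two filters for property 4. It differs only in details: the paper uses the sharper off-center overlap count $\kappa^2-\kappa$ for the autocorrelation (your cruder $\kappa^2$ count still suffices), and you make explicit two points the paper leaves informal, namely conditioning on the positions of the ones in property 4 (where the paper only asserts an ``expected maximum peak value'' of $1+(\kappa^2-1)p_u^2/4$) and the strict comparison $p_u^2/4<p_u^2/3$.
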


\begin{proof}
 
 Property 1: The center of $R_{c\pi} \star R_{c\pi}$ is at $(\kappa,\kappa)$, where two filters completely overlap, and its minimum value is larger than 1. The maximum value at other positions is less than $(\kappa^2 - \kappa - 2) p_u^2 + 2p_u$. Using the assumption, we can deduce that $R_{c\pi} \star R_{c\pi} (\kappa,\kappa) > R_{c\pi} \star R_{c\pi} (i,j)$, for all $(i,j) \neq (\kappa,\kappa)$.

Property 2: When the one in $R_{c\pi}$ and the one in $R_{j\pi}$ appear at the same position in $R_{c\pi} \star R_{j\pi}$, the minimum value of $R_{c\pi} \star R_{j\pi}$ at the location is larger than one. The maximum value of other positions is less than $(\kappa^2 - 2) p_u^2 + 2p_u$. Based on the assumption, we achieve property 2.

Property 3: Based on property 2, the peak of $R_{c\pi} \star R_{j\pi}$ is at the position where the one in $R_{c\pi}$ and the one in $R_{j\pi}$ appear at the same position in $R_{c\pi} \star R_{j\pi}$. If the peak is at the center of $R_{c\pi} \star R_{j\pi}$, it means that the one in $R_{c\pi}$  and the one in $R_{j\pi}$ must be at the position. The probability that they are in the position is $\frac{1}{\kappa^2}$.

 Property 4: Utilizing Lemma \ref{lemma:sq_uniform} and the preceding properties, we can derive the expected peak value of $R_{c\pi} \star R_{c\pi}$:
 \begin{equation}
     E(R_{c\pi} \star R_{c\pi} (\kappa,\kappa)) = 1 + (\kappa^2 - 1) \left(\frac{p_u^2}{3}\right).
 \end{equation}
When two ones are at the same position in $R_{c\pi}$ and $R_{i\pi}$, $R_{c\pi} \star R_{i\pi} (\kappa,\kappa)$ achieves its maximum value. Thus the expected maximum peak value of $R_{c\pi} \star R_{i\pi}$ is
\begin{equation}
    E(R_{c\pi} \star R_{i\pi} (\kappa,\kappa)) = 1 + (\kappa^2 - 1) \left(\frac{p_u^2}{4}\right).
\end{equation}
\end{proof}

\subsection{Proof of Theorem 2}
\label{subsec:Proof_Thm2}

\begin{theorem}
\label{theorem2}
Given a CUDA filter size being larger than 1 and $x_{cij}$ being the vector form of the patch in $X_{ci}$ involved in producing $\eta_{cij}$, i.e., $\eta_{cij} = \wp x_{cij}$
, where $\wp$ is a $\kappa^2 \times (2\kappa-1)^2$ matrix,
if the column vectors of $\wp$, excluding the one corresponding to $y_{cij}$, span $\mathbb{R}^{\kappa^2}$, then, $\exists x_{cij} \in \mathbb{R}^{(2\kappa-1)^2}$ such that $y_{cij} \neq a_c^T \wp x_{cij}$.
\end{theorem}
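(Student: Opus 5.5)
We argue by contradiction, using only linear algebra on the map $x_{cij}\mapsto y_{cij}-a_c^T\wp x_{cij}$. Let $e$ denote the standard basis vector of $\mathbb{R}^{(2\kappa-1)^2}$ that selects the center pixel, so that $y_{cij}=e^T x_{cij}$. Suppose, contrary to the claim, that $y_{cij}=a_c^T\wp x_{cij}$ for every $x_{cij}$. Then the linear functional $x\mapsto (e^T-a_c^T\wp)x$ vanishes on all of $\mathbb{R}^{(2\kappa-1)^2}$. This gives the row-vector identity
\begin{equation}
\wp^T a_c = e .
\end{equation}
In words, the row vector $a_c^T\wp$ must equal $1$ in the coordinate of the center pixel and $0$ in every other coordinate.

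Next we compare this identity column by column. Write $\wp=[p_1,\dots,p_{(2\kappa-1)^2}]$, and let $p_0$ be the column for the center pixel. For every other column $p_k$, the identity requires $a_c^T p_k=0$. By hypothesis, these columns $p_k$ (excluding $p_0$) span $\mathbb{R}^{\kappa^2}$. So $a_c$ is orthogonal to every vector of $\mathbb{R}^{\kappa^2}$, which forces $a_c=0$. But then $a_c^T p_0=0\neq 1$, which contradicts the center-coordinate requirement. Hence some $x_{cij}$ violates the equality.

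To make the counterexample explicit, we can take $x_{cij}=e$: then $y_{cij}=1$ while $a_c^T\wp e=a_c^T p_0$. This witness works only when $a_c^T p_0\neq 1$. If instead $a_c^T p_0=1$, then $a_c\neq 0$. Since the remaining columns span $\mathbb{R}^{\kappa^2}$, some column $p_k$ with $k\neq 0$ satisfies $a_c^T p_k\neq 0$, and choosing $x_{cij}=e_k$ gives $y_{cij}=0\neq a_c^T\wp e_k$. This case split yields a constructive witness, whichever $a_c$ is fitted.

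The step that needs the most care is checking that the spanning hypothesis is meaningful and is used correctly. It requires $(2\kappa-1)^2-1\ge\kappa^2$, which is where the assumption $\kappa>1$ enters. We also need to confirm that $\wp$, the matrix of the cross-correlation with $R_c$ restricted to a patch, is indexed so that the center pixel corresponds to a single column. Once the setup is fixed, the linear algebra is immediate. The dimension mismatch between $(3\kappa-2)^2$ in the main text and $(2\kappa-1)^2$ in the appendix does not affect the argument, since only the spanning property is used.
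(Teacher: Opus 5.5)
Your proof is correct and follows the same route as the paper: requiring restoration for every $x_{cij}$ forces $a_c^T\wp$ to be the indicator row of the center pixel, and the spanning hypothesis on the remaining columns rules this out. Your version is slightly more careful than the paper's. The paper's step ``$\alpha_c$ cannot be orthogonal to all these column vectors'' silently skips the case $a_c=0$, which you dispose of explicitly, and you also give a concrete witness $x_{cij}\in\{e,e_k\}$.
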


\begin{proof}
    Let the patch in $X_{ci}$ involved in computing the patch $(X_{ci} \star R_c)_j$ be $(X_{ci})_j$. Since the patch size of $(X_{ci} \star R_c)_j$ and the size of $R_c$ are both $\kappa \times \kappa$, $((X_{ci})_j)$ has the size of $(2\kappa - 1) \times (2\kappa - 1)$. When $\kappa > 1$, $(X_{ci})_j$ is larger than $(X_{ci} \star R_c)_j$. Reshaping $(X_{ci})_j$ and $(X_{ci} \star R_c)_j$ as vectors, $x_{cij}$ and $\eta_{cij}$, their relationship can be represented as a linear equation, i.e., $\eta_{cij} = \wp x_{cij}$, where $\wp$ is a $\kappa^2 \times (2\kappa - 1)^2$ matrix. The matrix $\wp$ is constructed by $R_c$. To achieve perfect reconstruction $y_{cij} = (\alpha_c^T \wp) x_{cij}$ for all $x_{cij} \in \mathbb{R}^{(2\kappa - 1)^2}$, $\alpha_c^T \wp$ must be $[0 \cdots 0 \, 1 \, 0 \cdots 0]$, where the location of the one corresponds to $y_{cij}$ in $(X_{ci})_j$. Since the column vectors of $\wp$, excluding the one corresponding to $y_{cij}$, span $\mathbb{R}^{\kappa^2}$, $\alpha_c$ cannot be orthogonal to all these column vectors. Thus, $\alpha_c^T\wp = [0 \cdots 0 \, 1 \, 0 \cdots 0]$ cannot be obtained.
\end{proof}

\subsection{Extending CUDA to CIFAR-10}
\label{subsec:Proof_CIFAR10}

Since CUDA uses $p_u=0.3$ and $\kappa=3$ for CIFAR-10, it fulfils the condition $1 > \frac{1}{4}(\kappa^2 - 2)p_u^2 +  p_u$. According to Theorem \ref{theorem3}, CUDA filters exhibit similar properties shown in Theorem \ref{theorem1} on CIFAR-10.
\begin{theorem}
\label{theorem3}
    Given two different filters, $R_i$ and $R_c$ generated by CUDA,  if the condition $1 > \frac{1}{4}(\kappa^2 - 2) p_u^2 + p_u$ is satisfied, 
    $R_{c\pi}\star R_{i\pi}$ and $R_{c\pi}\star R_{c\pi}$ have the following properties.
    \begin{enumerate}
        \item The expected peak of $R_{c\pi}\star R_{c\pi}$ is located at the center.
        \item The expected peak of $R_{c\pi}\star R_{i\pi}$ occurs at the position where two ones in  $R_{c\pi}$ and  $R_{i\pi}$ appear at the same position in $R_{c\pi}\star R_{i\pi}$.
        \item The probability of the expected peak of $R_{c\pi}\star R_{i\pi}$ at the centre is $\frac{1}{k^2}$.
        \item  	The expected peak of $R_{c\pi}\star R_{c\pi}$ is higher than the expected peak of $R_{c\pi}\star R_{i\pi}$.
    \end{enumerate}
\end{theorem}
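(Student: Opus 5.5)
I will read ``expected peak'' as the location (and value) of the maximum of the \emph{conditional expectation} of the correlation map, given the positions of the ones in the filters. The proof then reduces to computing expectations entry by entry, much as in the proof of Theorem~\ref{theorem1} with Lemma~\ref{lemma:sq_uniform}.

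\textbf{Setup.} Fix the locations of the ones in $R_{c\pi}$ and $R_{i\pi}$ (these are independent and uniform over the $\kappa^2$ positions). Every other entry is an independent $U(0,p_u)$ variable, with mean $p_u/2$ and, by Lemma~\ref{lemma:sq_uniform}, second moment $p_u^2/3$. Each entry of a correlation map at shift $s$ is a sum of products over the overlapping pairs, and there are at most $\kappa^2$ such pairs. Each product has one of four types:
\begin{itemize}
\item one $\times$ one, contributing $1$;
\item one $\times$ uniform, with expectation $p_u/2$;
\item a uniform times an independent uniform, with expectation $p_u^2/4$;
\item a uniform times itself, with expectation $p_u^2/3$. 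This type occurs only in the autocorrelation at $s=0$.
\end{itemize}

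\textbf{Properties 1 and 2.} For $R_{c\pi}\star R_{c\pi}$ at the center, the expectation is $1+(\kappa^2-1)p_u^2/3>1$.

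At a nonzero shift, the overlap has at most $\kappa^2-\kappa$ pairs, and all products are between distinct entries. The single one of the filter appears in at most two of them: once as the left factor and once as the right factor. The expectation is therefore at most
\[
2\cdot\tfrac{p_u}{2}+(\kappa^2-\kappa-2)\tfrac{p_u^2}{4}<\tfrac14(\kappa^2-2)p_u^2+p_u<1 .
\]
The last inequality is the hypothesis, and this gives Property 1.

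For $R_{c\pi}\star R_{i\pi}$, the shift that aligns the two ones has expectation at least $1$. Any other shift has no one–one product. It has at most two one–uniform products: the one of $R_{c\pi}$ against a uniform of $R_{i\pi}$, and vice versa. Its remaining at most $\kappa^2-2$ products are uniform–uniform between independent filters. So its expectation is at most $p_u+(\kappa^2-2)p_u^2/4<1$, which gives Property 2.

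\textbf{Properties 3 and 4.} For Property 3: by Property 2, the expected peak is at the center exactly when the two ones occupy the same position. Since the two positions are independent and uniform, this happens with probability $1/\kappa^2$.

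For Property 4: at the aligned shift the overlap has $n\le\kappa^2$ pairs. One is the one–one pair, and the rest are independent uniform–uniform products. The expected peak of the cross-correlation is therefore $1+(n-1)p_u^2/4\le 1+(\kappa^2-1)p_u^2/4$. This is strictly less than $1+(\kappa^2-1)p_u^2/3$ whenever $p_u>0$ and $\kappa>1$.

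\textbf{Main obstacle.} The delicate step is the bookkeeping at non-peak shifts. I must verify that a one can appear in at most two products and never multiplies another one. I must also verify that no self-product $R^2$ (with the larger moment $p_u^2/3$) occurs at a nonzero shift. This is what makes the weaker constant $\tfrac14$, rather than $1$, in the hypothesis sufficient. I would settle it by an explicit index argument: at shift $s\neq0$, the pair $(q,q+s)$ never repeats an index, so each product is of distinct independent entries.
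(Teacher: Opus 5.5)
Your proposal is correct and follows essentially the paper's route. Both compute conditional expectations entry by entry given the positions of the ones, bound the non-aligned shifts by $p_u+\frac14(\kappa^2-2)p_u^2<1$ for Property 2, use the $1/\kappa^2$ count for Property 3, and compare $p_u^2/3$ against $p_u^2/4$ for Property 4.

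The only difference is in Property 1. The paper enumerates four placements of the ones and compares the center directly with its largest case, $1+\frac14(\omega-1)p_u^2$, which in fact cannot occur for an autocorrelation at a nonzero shift. Your observation that nonzero shifts contain no one--one or self products gives a tighter bound. Your implicit step that having fewer one--uniform products can only lower the bound needs $p_u<2$, which your hypothesis guarantees.
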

\begin{proof}
Property 1: The center of $R_{c\pi} \star R_{c\pi}$ is at $(\kappa, \kappa)$, where two filters completely overlap, and its expected value is
\begin{equation}
    E(R_{c\pi} \star R_{c\pi} (\kappa, \kappa)) = 1 + \frac{1}{3} (\kappa^2 - 1) p_u^2.
\end{equation}
Let $\omega$ be the number of pixels overlapped when computing $E(R_{c\pi} \star R_{c\pi} (i,j))$, where $(i,j) \neq (\kappa,\kappa)$. We have the following cases:

\noindent Case 1: If both ones are not inside the overlapping region, 
\begin{equation*}
    E(R_{c\pi} \star R_{c\pi} (i,j)) = \frac{1}{4} \omega p_u^2
\end{equation*}
Case 2: If both ones are inside the overlapping region but at different locations, 
\begin{equation*}
    E(R_{c\pi} \star R_{c\pi} (i,j)) = \rho_u + \frac{1}{4}(\omega - 2)p_u^2
\end{equation*}
Case 3: If both ones are inside the overlapping region and at the same location, 
\begin{equation*}
    E(R_{c\pi} \star R_{c\pi} (i,j)) = 1 + \frac{1}{4}(\omega - 1)p_u^2
\end{equation*}
Case 4: If only one is inside the overlapping region, 
\begin{equation*}
    E(R_{c\pi} \star R_{c\pi} (i,j)) = \frac{1}{2} \rho_u + \frac{1}{4}(\omega - 1)p_u^2
\end{equation*}
Among the four cases, the third case is the largest one, since $\rho_u < 1$. Clearly $E(R_{c\pi} \star R_{c\pi} (\kappa,\kappa))$ is larger than $E(R_{c\pi} \star R_{c\pi} (i,j))$ in the third case. Thus, the expected peak of $E(R_{c\pi} \star R_{c\pi} (\kappa,\kappa))$ is at the center.

{Property 2}: When two ones overlap, the minimum value of $E(R_{c\pi} \star R_{i\pi} (i,j))$ is larger than 1. The expected maximum value of $E(R_{c\pi} \star R_{i\pi} (i,j))$ when two ones do not overlap is
\begin{equation*}
    E(R_{c\pi} \star R_{i\pi} (i,j)) = \rho_u + \frac{1}{4}(\kappa^2 - 2)p_u^2.
\end{equation*}
Using the assumption, the expected peak appears at the position when two ones overlap. 

{Property 3: }Based on property 2, the expected peak appears at the position when two ones are at the same position. Thus, the probability of the expected peak appearing at the center is $\frac{1}{\kappa^2}$

{Property 4}: The proof is the same as the proof of property 4 in Theorem \ref{theorem1}.

\end{proof}

\section{Experiment Setup}

\subsection{Training details}
\label{subsec:Training_Details}

\begin{table}[htbp]
  \centering
  \caption{Training settings for  SL and SSL. For cells with two values, the left value is for CIFAR-10 and the right value is for ImageNet-100. `Pretrain' is the standard augmentation utilized in SSL pretraining, while `LinProbe' is the standard augmentation employed in SSL linear probing.}
  \label{training}
  \begin{tabular}{lccccc}
    \toprule
    Setting  & SSL Pretrain & SSL Linear Probing & SL \\
    \midrule
    Augmentations  & Pretrain & LinProbe & LinProbe \\
    Epochs & 400 | 200 & 100 & 100 \\
    Batch Size  & 512 | 128 & 512 & 256 \\
    LR x BatchSize/256  & 1.0 | 0.25 & 1.0 | 10.0 & 0.1 \\
    LR Warmup & 10 & 0 & 0 \\
    LR Decay  & Cosine & Step (0.2x at 60,75,90) & Cosine \\
    Optimizer & SGD & SGD & SGD \\
    Momentum  & 0.9 & 0.9 & 0.9 \\
    Weight Decay  & 1e-4 & 0 & 5e-4 \\
    \bottomrule
  \end{tabular}
  \label{tab:Train_Settings}
\end{table}

The training details are provided in Table \ref{training}. However, there are two exceptions: for VGG-19 and ViT models, the learning rate in SL is set to 0.01, and the training epoch for ViT is adjusted to 200. For the SL training, we utilize the standard linear probing augmentation of SSL as it has demonstrated an enhancement in clean test accuracy. This implies that the linear probing augmentation acts as an effective defense against certain DAAs. Specifically, in ImageNet training, the data augmentation comprises RandomResizedCrop with an expected output size of 224 along with RandomHorizontalFlip. For CIFAR-10 training, RandomCrop with an output size of 32 and padding set to 4 is employed, alongside RandomHorizontalFlip. For SSL pretraining and linear probing, we follow the standard augmentations used in \cite{chen2020simple, chen2021exploring, chen2021mocov3, grill2020bootstrap}. 

\section{Additional Results}

\subsection{Images with Double Blur and Random Sharpness}
\label{subsec:DoubleBlur_RandomSharp}

The examples presented in Figure \ref{fig:pioson_example} depict outputs generated through Double Blur, Random Sharpness, and IRP. The images produced by Double Blur exhibit the highest level of blurriness. The degree of blurriness in the images generated by Random Sharpness varies, as the sharpness filters are randomly generated class-wise. Conversely, the images produced by IRP are the clearest, owing to the optimization process customized for the blurriness filter in IRP.

\begin{figure}
    \centering
    \includegraphics[width=\textwidth]{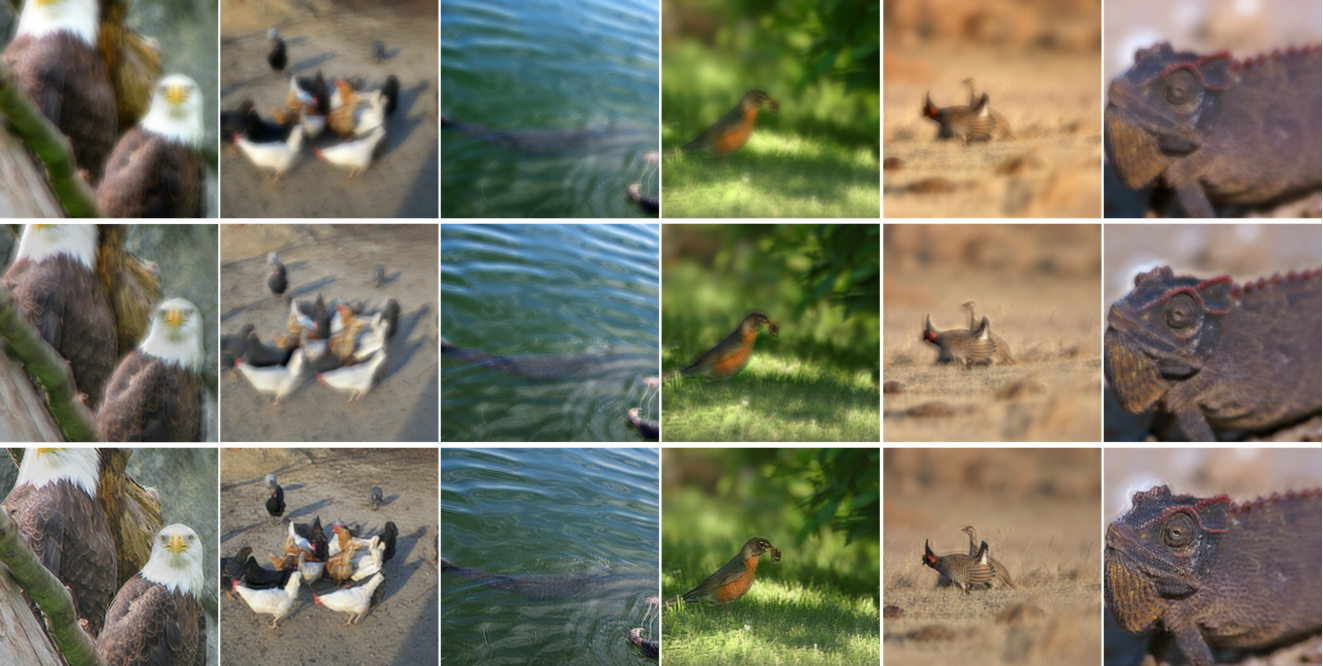}
    \caption{Example images generated by Double Blur (the first row), Random Sharpness (the second row), and IRP (the third row) on ImageNet-100.}
    \label{fig:pioson_example}
\end{figure}

\subsection{Additional Defenses}

We further evaluate IRP on three recently invented poison defenses. AVATAR \cite{dolatabadi2024devilsadvocateshatteringillusion} adds Gaussian noise to poisoned images and `purifies' them with a pretrained diffusion model. UEraser \cite{qin2023learningunlearnableadversarialaugmentations} applies multiple augmentations to poisoned images and selects the one that maximizes loss. COIN \cite{wang2024corruptingconvolutionbasedunlearnabledatasets} applies random pixel interpolations to reduce the impact of filter-based poisons. 

Figure \ref{tab:DAA_Defenses} shows that AVATAR and COIN are useful defenses against both CUDA and IRP poisons. Under COIN purification, CUDA-poisoned CIFAR-10 permits a clean test accuracy of 71.90\%. AVATAR raises the max accuracy of IRP-poisoned CIFAR-10 to 54.78\%, though this value is still severely low compared to training on clean data. UEraser is less effective as a defense than standard methods like SSL and AT. Across all defenses, IRP is a more potent DAA than CUDA.

\begin{table}[h]
  \centering
  \caption{Top-1 clean test accuracies (\%) of ResNet-18 trained with data poisoned by various DAAs on CIFAR-10. The ``Max'' column represents the highest clean test accuracy, which corresponds to the worst performance of each DAA under different training scenarios. Bold indicates the best performance.}
  \begin{tabular}{l|ccc|ccc|c}
    \toprule
    DAA & SL & SSL & AT & AVATAR & UEraser & COIN & \multicolumn{1}{c}{Max} \\
    \midrule
    Clean & 93.86 & 90.55 & 89.57 & - & - & - & 93.86 \\
    \midrule
    CUDA & 21.89 & 66.58 & 48.58 & 55.93 & 40.56 & 71.90 & 71.90 \\
    IRP & \textbf{10.39} & \textbf{43.24} & \textbf{32.21} & \textbf{54.78} & \textbf{26.55} & \textbf{53.32} & \textbf{54.78} \\
    \bottomrule
  \end{tabular}
  \label{tab:DAA_Defenses}
\end{table}